\newtheorem{definition}{Definition}
\newtheorem{lemma}{Lemma}
\newtheorem{theorem}{Theorem}
\def\BibTeX{{\rm B\kern-.05em{\sc i\kern-.025em b}\kern-.08em
    T\kern-.1667em\lower.7ex\hbox{E}\kern-.125emX}}
\begin{document}


\title{RIFLES: Resource-effIcient Federated LEarning via Scheduling} 

\author{
\IEEEauthorblockN{Sara Alosaime}
\IEEEauthorblockA{\textit{University of Warwick} \\
Sara.Alosaime@warwick.ac.uk}
\and
\IEEEauthorblockN{Arshad Jhumka}
\IEEEauthorblockA{\textit{University of Leeds} \\
H.A.Jhumka@leeds.ac.uk}
}

\maketitle

\begin{abstract}

Federated Learning (FL) is a privacy-preserving machine learning technique that allows decentralized collaborative model training across a set of distributed clients, by avoiding raw data exchange. A fundamental component of FL is the selection of a subset of clients in each round for model training by a central server. Current selection strategies are myopic in nature in that they are based on past or current interactions, often leading to inefficiency issues such as straggling clients. In this paper, we address this serious shortcoming by proposing the RIFLES approach that builds a novel \emph{availability forecasting} layer to support the client selection process. We make the following contributions: (i) we formalise the sequential selection problem and reduce it to a scheduling problem and show that the problem is NP-complete, (ii) leveraging heartbeat messages from clients, RIFLES build an availability prediction layer to support (long term) selection decisions, (iii) we propose a novel adaptive selection strategy to support efficient learning and resource usage. To circumvent the inherent exponential complexity, we present RIFLES, a heuristic that leverages clients' historical availability data by using a CNN-LSTM time series forecasting model, allowing the server to predict the optimal participation times of clients, thereby enabling informed selection decisions. By comparing against other FL techniques, we show that RIFLES provide significant improvement by between 10\%-50\% on a variety of metrics such as accuracy and test loss. 
To the best of our knowledge, it is the first work to investigate FL as a scheduling problem.

\end{abstract}

\begin{IEEEkeywords}
Federated learning , Scheduling,  Availability, Client Selection. 
\end{IEEEkeywords}

\section{Introduction}

With the proliferation of Internet of Things (IoT) devices, users are increasingly collecting  substantial amounts of personal data related to their daily activities. This extensive data supports diverse applications, including human activity recognition~\cite{chen2017performance,khan2022human} and healthcare monitoring~\cite{baines2024patient,allegri2025collecting}. To address privacy concerns inherent in handling such sensitive data, Federated Learning (FL) has emerged as a revolutionary decentralized machine learning paradigm, facilitating collaborative training of models across multiple clients without compromising individual privacy~\cite{mcmahan2017communication}. By conducting computation directly on client devices, FL effectively addresses data governance concerns while preserving privacy~\cite{bonawitz2019konecny}. Initially introduced by Google~\cite{konevcny2016federated}, FL has gained significant traction across various fields, such as healthcare~\cite{grama2020robust} and finance~\cite{long2020federated}. 

In a standard FL scenario, clients collaborate in model training over multiple communication rounds, coordinated by a central server. Typically only subsets of clients participate per round to optimize resource utilization and to ensure model representativeness~\cite{kairouz2021advances}. The selection process can be initiated by either the central server or by the client themselves, both with their own disadvantages. For instance, server-initiated client selection may inadvertently select clients that are unavailable during a particular training round, leading to wasted resources or idle times~\cite{abdelmoniem2021resource,bonawitz2019konecny,konevcny2016federated}. On the other hand, employing client-initiated selection could result in significant communication overhead due to frequent and uncontrolled model updates or introduce bias as faster or more active clients disproportionately influence the global model~\cite{chen2020asynchronous,wang2022asynchronous}. We call these selection strategies \emph{myopic} as they focus on client selection for a given round only, eschewing a more general approach where client participation may be considered across several rounds depending on their respective availabilities.

As such, we study the following problem, called RIFLES: Given the client availability prediction, is it possible to schedule clients to improve FL performance? We first reduce the client scheduling (RIFLES) problem to a resource-constrained task scheduling problem and prove RIFLES to be NP-complete. To address the inherent associated complexity, we propose a novel heuristic, which we called a \emph{greedy heuristic} (GH) for client scheduling. The heuristic, integrated within a server-based middleware (called the RIFLES middleware), leverages the fact that, in a distributed system, nodes often transmit heartbeats to notify of their operational status. RIFLES uses those heartbeats to develop a CNN-LSTM client availability forecasting model to support an optimal client selection strategy for each round.

We implement the RIFLES middleware and integrate two different scheduling algorithms within RIFLES, to produce two variants namely RIFLES-GH and RIFLES-LRU (Least Recently Used). We compare the performances of these RIFLES variants against a number of existing FL techniques. Our results show that RIFLES-GH and RIFLES-LRU significantly improve on previous FL techniques by between 10\% - 50\% on a range of metrics such as accuracy, test loss and client participation among others, showing the efficiency of the customisable RIFLES framework. On the other hand, RIFLES-GH provides better performance than RIFLES-LRU. Table \ref{tab:fl_methods_comparison} compares popular FL methods based on their abilities to address key scheduling criteria, including adaptive round timing, clients capability awareness, clients availability and historical participation tracking.  RIFLES outperforms these approaches by tackling all these criteria, ensuring more efficient FL.

\begin{table}[h!]
\centering
\caption{Comparison of RIFLES and FL baselines Based on Scheduling Criteria.}
\begin{tabular}{c c c c c}
\toprule
\textbf{Method} & \makecell{\textbf{Capability} \\ \textbf{Awareness}} & \makecell{\textbf{Availability} \\ \textbf{Awareness}} & \makecell{\textbf{Historical} \\ \textbf{Participation} \\ \textbf{Awareness}} & \makecell{\textbf{Adaptive} \\ \textbf{Rounds} \\ \textbf{Timing}} \\ 
\midrule
FedAvg~\cite{mcmahan2017communication} & \ding{55} & \ding{55} & \ding{55} & \ding{55} \\ 
FedCS~\cite{nishio2019client} & \ding{51} & \ding{55} & \ding{55} & \ding{55} \\ 
REFL~\cite{abdelmoniem2023refl} & \ding{51} & \ding{51} & \ding{55} & \ding{55} \\ 
RIFLES \\ (Our Method) & \ding{51} & \ding{51} & \ding{51} & \ding{51} \\ 
\bottomrule
\end{tabular}

\label{tab:fl_methods_comparison}
\end{table}

\section{Related Work}\label{sect:relwork}
Despite advancements in FL, many existing approaches still rely on random client selection, resulting in suboptimal model performance and inefficient resource utilisation, particularly in the presence of client heterogeneity and varying availability~\cite{kim2021dynamic, mayhoub2024review}. Over the years, researchers have proposed various client selection strategies to address these challenges, where the server selects a subset of clients (e.g., 10s of clients from 1,000s of clients) to contribute to the global model in each round. Existing client selection strategies vary widely based on system and statistical objectives. Some approaches prioritise clients with superior hardware and network capabilities~\cite{nishio2019client}, while others focus on enhancing statistical efficiency by selecting clients that contribute higher-quality updates~\cite{chen2020optimal,ruan2021towards}. For instance, FedCS\cite{nishio2019client} emphasises system and network performance during client selection, whereas Oort~\cite{lai2021oort} incorporates both system and statistical considerations to optimise the selection process. TiFL~\cite{chai2020tifl} sorts clients into multiple tiers (e.g., fast and slow tiers) for training.

 There exists strategies facing underutilized resources and reduced model coverage, especially under availability heterogeneity~\cite{abdelmoniem2023refl,yang2022flash}. The problem of unpredictable client availability increases the risk of dropouts and stragglers, underscoring the need for availability-aware approaches. To address this challenge, several works have proposed innovative solutions. Particularly, REFL~\cite{abdelmoniem2023refl} and FLASH~\cite{yang2022flash} introduce selection mechanisms that integrate client availability into the selection process. These methods also efficiently integrate stale updates, mitigating the negative consequences of availability heterogeneity in FL. Nonetheless, current client selection strategies encounter are mostly myopic, predicting availability only for the next round while we propose an approach to schedule clients over several rounds.
 

\section{Federated Learning: System Model, Hypothesis and Objectives}\label{sect:models}
We consider a typical cross-device FL system comprising a central server \(PS\) and multiple clients \( \mathcal{N}\).  Clients periodically signal availability (e.g., via heartbeats) and may follow predictable diurnal usage patterns; e.g., individuals who follow daily or weekly routines at home or workplaces have reliable patterns for predicting their availability. Furthermore, we assumed the communication network to be reliable; messages between the server and clients are eventually delivered. We refer to the collective local training of a model on a given client $i$ as a job $J_i$ and one instance of a local training of a model on a client $i$ as a task $T_i^j~(\in J_i)$, where $j$ represents the $j^{th}$ training instance on client $i$.

\noindent\textbf{Objective}:  Mathematically, the objective of FL can be stated as follows: let $p_i$ be the probability of the client $i$ involvement in any given round.   The probability \( p_i \) differs between clients due to variations in their devices availability times and system constraints such as communication efficiency, defined as:
\small
\[
\mathbb{E}[\Delta w] = \eta \sum_{i=1}^N p_i \nabla F_i(w).
\]
\normalsize
where $\mathbb{E}[\Delta w]$ is the expected global model update at round, $\eta$ is the learning rate, $N$ is the total number of clients,  $\nabla F_i(w)$ is the gradient of the local objective function for client $i$ at the current model $w$, $p_i$ is the probability that client $i$ participates in that round. Thus, the aggregated global model update spanning all $T$ rounds,  summation of the anticipated contributions from each client across all rounds, defined as:
\small 

\[
\mathbb{E}[\Delta W] = \eta \sum_{t=1}^T \sum_{i=1}^N p_i \nabla F_i(w) = \eta T \sum_{i=1}^N p_i \nabla F_i(w).
\]
\normalsize
where, $E[\Delta W]$ is the expected global model update over $T$ rounds. This demonstrates higher \( p_i \) biases updates and that scheduling affects FL cost.

\section{Limitations of Existing Client Selection Strategies}\label{sect:background}

In real-world FL settings, full client participation is impractical due to various factors, e.g., unstable connectivity, limited resources or large client pools~\cite{kairouz2021advances, bonawitz2019towards, lai2022fedscale}, making effective client selection essential. Selection typically occurs before each training round, with the server choosing clients based on factors like resource status, device capabilities, data quality or historical performance~\cite{chen2020optimal, wu2020safa, cho2020client}. As results, the client selection mechanism is typically one of two variants: (i) Push mechanism (Server-Initiated mechanism) and (ii) Pull mechanism (Client-Initiated mechanism). 

\begin{figure}
    \centering
   \includegraphics[width=0.8\linewidth]{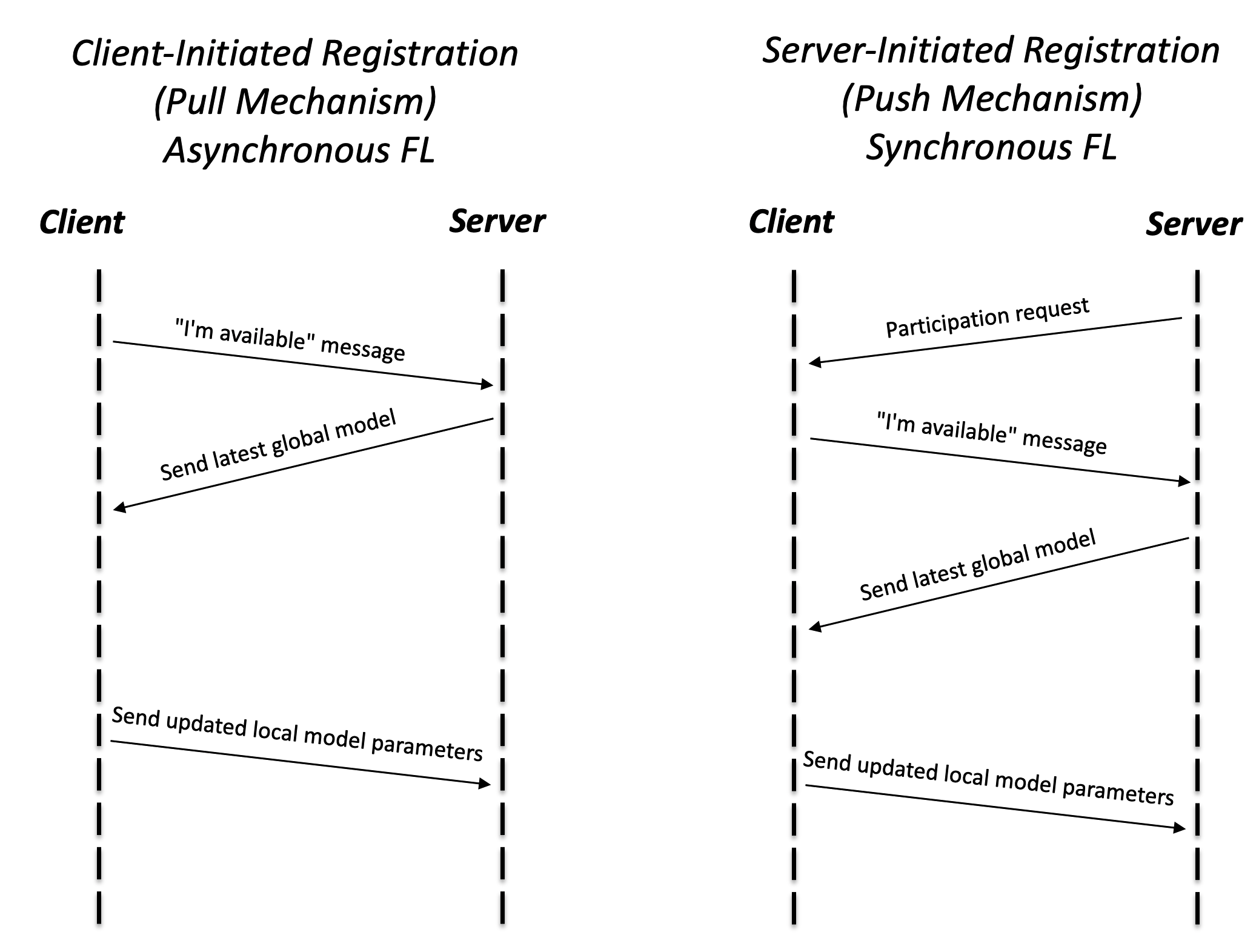}
    \caption{Registration Mechanisms in FL: Client-Initiated (Pull mechanism) vs. Server-Initiated (Push mechanism).}
    \label{fig:enter-label}
\end{figure}
\begin{itemize}
    \item  \textbf{Push Mechanism:} At the start of a round, the server sends out requests to clients to check their availability. Clients convey registration messages, indicating their preparedness to participate.  This mechanism is commonly employed in synchronous FL, where the server collects registration messages and picks a subset of clients for the subsequent  round according to predefined criteria such as resource availability, data diversity or  fairness considerations, e.g., ~\cite{ribero2022federated,abdelmoniem2023refl,yang2022flash}.

    \item  \textbf{Pull Mechanism:} The clients autonomously determine when to join in the training process by proactively sending their status and capabilities to the central server. This approach is frequently utilized in asynchronous FL.
    When the server receives the registration message, it immediately transmits  the latest global mechanism to the client for local updates, e.g.,~\cite{nguyen2022federated,wang2022asynchronous}.

\end{itemize}

Both mechanisms exhibit drawbacks stemming from inadequate scheduling insights and a lack of awareness of client availability across training rounds on the server side, resulting in myopic selection strategies. These may result in sub-optimal performance as depicted in Figure~\ref{fig:fig2} or there is a risk of catastrophic staleness of the slow devices, especially under large-scale
FL~\cite{sun2022fedsea}. Furthermore, if clients train on older versions of the global model before a newer version becomes available, the model's convergence rate and accuracy could decrease~\cite{wu2020safa}. Although the server simultaneously initiates participation requests in the push strategy, the availability of a client can change after the selection phase, resulting in what is termed as ``availability faults''~\cite{alosaime2024flare}.

\begin{figure}
    \centering
   \includegraphics[width=0.9\linewidth, height=0.6\linewidth]{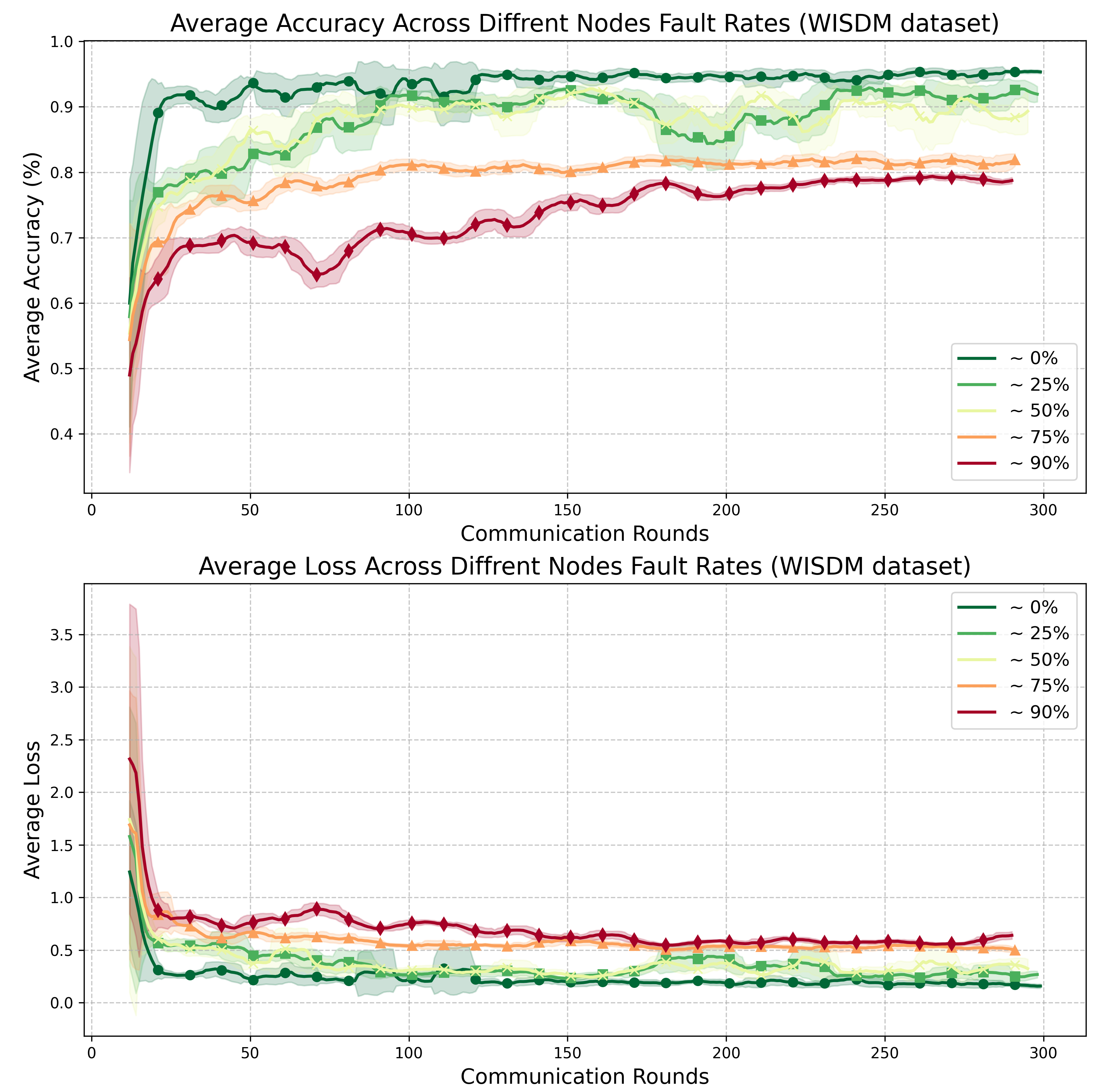}
    \caption{Effect of varying availability fault rates on the performance of REFL~\cite{abdelmoniem2023refl}, a state-of-the-art FL technique.}
    \label{fig:fig2}
\end{figure}

Availability faults mean only a fraction of selected clients participate per round~\cite{abdelmoniem2023refl,abdelmoniem2023comprehensive}, disrupting training coordination, increasing dropout rates, slowing convergence, and degrading global model quality~\cite{wang2019adaptive}, as shown in Figure~\ref{fig:fig2}. Typical mitigation approaches suffer from relying on static assumptions about client availability, which do not adapt to dynamic client behavior.
To address these gaps, we propose the RIFLES approach, inspired by real-time systems, using a time-series forecasting model to enable informed scheduling across training rounds.


\section{Theory}\label{sect:theory}
In this section, we first formalise the RIFLES problem and subsequently study the complexity of the client scheduling problem. In the formalisation below, the parameter $\beta$, termed as the local job execution proportion, captures the availability proportion of a client for training across rounds. Setting $\beta = 100\%$ means that all clients needs to participate in every round. On the other hand, the parameter $\alpha$ captures the proportion of clients to participate in training in any given round, i.e., $\alpha = 100\%$ means that all available clients need to participate in training in every round.

\begin{definition}[RIFLE Scheduling (RIFLES)]
   Given a number $n \in Z^+$ of clients $\{1,\dots, n\}$, a given set $J = \{J_1, \ldots, J_n\}$ of training jobs, where each job $J_i \in J$ is a set of training tasks $T_i^1,\ldots, T_i^K$, with each task $t$ having length $l(t) = 1$, a global job selection proportion $\alpha$, a local job execution proportion $\beta$ and a training deadline $p \in Z^+$, does there exist an $n$-client schedule $\phi$ for $J$ that (i) meets the overall deadline $p$, (ii) no more than ($\alpha\cdot n$) clients are executing tasks at any point in time (i.e., in any time slot), (iii) at least ($\beta \cdot K$) tasks are executed for each job $J_i$ and (iv) all tasks $T_i^j$ of job $J_i$ execute on the same client $i$?  
\end{definition}

\begin{lemma}[RIFLES and class of NP]
\label{lem:np}
    RIFLES is in NP.
\end{lemma}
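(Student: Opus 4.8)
The plan is to follow the standard recipe for NP membership: exhibit a certificate of polynomial size for every YES-instance together with a verification procedure that runs in time polynomial in the instance size. The natural certificate here is the schedule $\phi$ itself. Concretely, I would encode $\phi$ as the list of executed tasks, recording for each executed task $T_i^j$ the client on which it runs and the integer time slot in $\{1, \dots, p\}$ into which it is placed. Since there are at most $nK$ tasks in total and each receives a single slot value, this list has at most $nK$ entries, each of size $O(\log p + \log n)$, so the whole certificate has size $O(nK(\log p + \log n))$, which is polynomial in the length of the encoded instance. (Condition (iv) in fact forces job $J_i$ onto client $i$, so the client field is redundant, but keeping it makes the verifier uniform.)

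The verifier then makes a constant number of passes over this list to check conditions (i)--(iv), together with the implicit well-formedness requirement that a single client never runs two unit-length tasks in the same slot. Condition (i) is one comparison per task (slot value $\le p$); condition (iv) checks that every recorded $T_i^j$ is assigned to client $i$; condition (iii) is verified by counting, for each job $J_i$, how many of its tasks are executed and testing that count against the threshold $\beta K$; and well-formedness is checked by sorting the list by (client, slot) pairs and scanning for duplicates. Each such pass is polynomial in $nK$, so none of these conditions poses a difficulty.

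The one point requiring care -- and the step I would flag as the main obstacle -- is condition (ii), which bounds by $\alpha n$ the number of clients active in \emph{any} time slot. A naive verifier would loop over all $p$ slots, but $p$ is supplied as an integer (in binary), so $p$ may be exponential in the instance size and such a loop would break the polynomial-time bound. The resolution is to observe that at most $nK$ slots are ever occupied, and that every empty slot trivially satisfies (ii); hence it suffices to group the recorded tasks by slot and, for each of the $\le nK$ nonempty slots, count the number of distinct clients present and compare it with $\alpha n$. This keeps the entire verification polynomial in $nK$ and $\log p$, which establishes that RIFLES lies in NP.
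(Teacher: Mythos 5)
Your proposal is correct, and it follows the same basic guess-and-verify template as the paper: the certificate is the schedule itself, and the verifier checks conditions (i)--(iv) directly. The difference lies in how the verification is carried out, and it is a substantive one. The paper's proof simply sweeps over all $p$ time slots for all $n$ clients and declares the cost $O(pn)$; your verifier instead makes passes over the at most $nK$ recorded task assignments, handling condition (ii) by grouping tasks by slot and ignoring empty slots. This matters because, as you correctly flag, $p$ is an integer supplied in the input and under the standard binary encoding may be exponential in the instance size, so the paper's $O(pn)$ bound is polynomial only under the tacit assumption that $p$ is polynomially bounded (or unary-encoded) --- an assumption the paper never states. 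Your argument buys genuine polynomial-time verification regardless of how large $p$ is, at the modest cost of the sorting/grouping step and the observation that empty slots vacuously satisfy the per-slot constraint. In short: same certificate, same conditions checked, but your verifier is the one that actually withstands scrutiny of the encoding, and it quietly repairs a gap in the paper's own two-line argument.
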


\begin{proof}
    To prove this, we need to verify the correctness of a given possible solution $R^*$ of RIFLES in polynomial time. 

    To check conditions (i) - (v) requires checking all $p$ periods across all $n$ clients, making the verification process $O(pn)$.
\end{proof}

We will now prove that RIFLES is NP-complete by showing a reduction to the problem of resource-constrained scheduling~\cite{garey1979npc}, which we define formally now.

\begin{definition}[Resource-Constrained Scheduling (RCS)]
   Given a set $T$ of tasks, with each task $t$ having length $l(t) = 1$, a number $m \in Z^+$ of processors, a number $r \in Z^+$ of resources, resource bounds $B_i, 1 \leq i \leq r$, resource requirement $R_i(t), 0 \leq R_i(t) \leq B_i$, for each task $t$ and an overall deadline $D \in Z^+$, does there exist an $m$-processor schedule $\sigma$ for $T$ that meets the overall deadline $D$ and obeys the resource constraints, i.e., such that $\forall u \geq 0$, if $S(u)$ is the set of all tasks $t \in T$ for which $\sigma(t) \leq u < \sigma(t)~+~l(t)$, then for each resource $i$, the sum of $R_i(t)$ over all $t\in S(u)$ is at most $B_i$?
\end{definition}

\begin{lemma}[RIFLES and NP-hardness]
\label{lem:nph}
    RIFLES is NP-hard.
\end{lemma}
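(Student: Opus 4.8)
The plan is to establish NP-hardness by exhibiting a polynomial-time many-one reduction \emph{from} RCS \emph{to} RIFLES (note that, despite the phrasing, the hardness direction must map an arbitrary RCS instance into a RIFLES instance, not the reverse); since RCS is NP-complete, this gives RIFLES its NP-hardness, and together with Lemma~\ref{lem:np} it yields NP-completeness. Concretely, given an arbitrary RCS instance with task set $T$, $m$ processors, $r$ resources, bounds $B_i$, requirements $R_i(t)$ and deadline $D$, I would construct in polynomial time a RIFLES instance — clients $n$, job family $J$ with per-job task count $K$, proportions $\alpha,\beta$ and deadline $p$ — so that the RIFLES instance admits a feasible schedule $\phi$ if and only if the RCS instance admits a feasible $m$-processor schedule $\sigma$.

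The natural correspondence I would exploit is between the per-slot cap of RIFLES and the capacity constraints of RCS. Condition (ii) of RIFLES — at most $\alpha\cdot n$ clients busy in any slot — plays the role of a shared capacity bound: I would set the deadline $p := D$ so the horizons coincide, map the $m$ processors onto clients, and translate each RCS task into a task residing in some job so that condition (iv) pins each task to a fixed client exactly as a processor assignment does. The coverage condition (iii) would be tuned through $\beta$ so that a feasible RIFLES schedule is forced to run precisely the tasks a feasible RCS schedule must run (e.g. taking $\beta$ large enough that every mandatory task executes), while $\alpha$ is chosen so that $\alpha\cdot n$ equals the binding capacity. With these settings, a RIFLES schedule restricted to occupied (client, slot) pairs reads off directly as an RCS schedule and vice versa, and this equivalence is what I would verify in both directions.

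The hard part will be faithfully encoding the \emph{resource} structure of RCS, and this is where most of the construction effort goes. RIFLES offers only one homogeneous per-slot bound — each busy client counts as a single unit toward $\alpha\cdot n$ — whereas RCS hardness genuinely requires heterogeneous demands $R_i(t)$ and potentially several simultaneous resources $B_1,\dots,B_r$. The key step is therefore a gadget that simulates a task of demand $\rho$, and the simultaneous pressure of several resources, using only counts of concurrently active clients: for instance representing a demanding task by a bundle of auxiliary jobs whose forced co-execution consumes the correct number of units in a slot, together with padding (dummy jobs whose tasks must run under condition (iii)) to calibrate $\alpha\cdot n$ against each $B_i$. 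I would first carry this out for the single-resource restriction of RCS, which is already NP-complete, to keep the gadget clean, check that the produced instance has size polynomial in the RCS input, and only then handle the general multi-resource case by layering the gadget across disjoint slot windows. Guaranteeing that these gadgets neither create spurious feasible schedules nor destroy genuine ones — so that the biconditional survives the encoding — is the principal obstacle and the part I would write most carefully.
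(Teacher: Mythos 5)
Your plan starts from the right place---and, in fact, from a more careful place than the paper itself: you correctly insist that NP-hardness requires transforming an \emph{arbitrary} RCS instance into a RIFLES instance. The paper's argument does not do this; its ``mapping'' is written from RIFLES objects to RCS objects (it fixes $\alpha = \beta = 100\%$, $r=1$, $R_1(t)=1$, $B_1 = \alpha n$, $D = p$, and then repairs condition~(iv) by swapping tasks between clients slot by slot), so it only ever touches the fragment of RCS with a single resource and unit demands---precisely the fragment in which the resource constraint degenerates into a plain concurrency cap. You also correctly diagnose that the whole difficulty is encoding heterogeneous demands $R_i(t)$, and several simultaneous bounds $B_i$, using nothing but the homogeneous ``at most $\alpha n$ busy clients per slot'' rule. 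So as a critique of what a hardness proof must accomplish, your proposal is sound, and it is genuinely different from (and more demanding than) what the paper does.

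The genuine gap is that your proposal stops exactly where the proof would have to happen, and the missing step cannot be carried out for RIFLES as defined. The gadget you describe---a bundle of auxiliary jobs whose \emph{forced co-execution} consumes $\rho$ units in a slot---has no mechanism to enforce the co-execution: RIFLES has no release times, no precedence constraints, no per-client availability windows, and every job is structurally identical ($K$ unit tasks pinned to its own dedicated client). Tasks of distinct jobs can always be spread over slots independently. Concretely, a RIFLES instance is feasible iff there exists an $n \times p$ binary matrix with every row sum at least $\lceil \beta K \rceil$ and every column sum at most $\lfloor \alpha n \rfloor$, which holds iff $\lceil \beta K \rceil \le p$ and $n \lceil \beta K \rceil \le p \lfloor \alpha n \rfloor$; clients and slots are fully interchangeable, so feasibility is a single arithmetic check. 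Any polynomial-time reduction from RCS to such a problem would place an NP-complete problem in P. Hence the step you defer as ``the part I would write most carefully'' is not a matter of care: it is where the approach fails, and for the same underlying reason the paper's claimed equivalence does not establish hardness either. A genuine hardness proof would first have to enrich the formal RIFLES definition (e.g., with per-client, per-slot availability constraints) so that it can express something beyond a uniform concurrency cap; with the definition as stated, the lemma cannot be proved.
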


\begin{proof}
    To prove that RIFLES is NP-hard, we first show a mapping between PCS and RIFLES and then reduce PCS to the RIFLES problem.

    \noindent \textbf{Mapping:}
\small
    \begin{itemize}
        \item $\phi \mapsto \sigma$
        \item $\bigcup_{i=1}^n J_i \mapsto T$
        \item $\alpha \mapsto 100\%$
        \item $\beta \mapsto 100\%$
        \item $p \mapsto D$
        \item $1 \mapsto r$
        \item $\alpha n \mapsto B_1$
        \item $1 p \mapsto R_1(t)$
    \end{itemize}
\normalsize
We now need to show that a solution for RIFLES exists if and only if a solution for RCS exists.

\begin{itemize}
    \item[] 
    \item[$\Leftarrow$] We show how a solution for RIFLES, i.e., $\phi$, can be obtained from a solution for RCS, i.e., $\sigma$. Because $\sigma$ solves RCS, under the identified mapping, $\sigma$ satisfies conditions (i) - (iii) of RIFLES. However, condition (iv) may not be satisfied and has to be resolved, as follows: Condition (iv) is violated when a task $T_i^j$ is executing on a processor $k (\not = i)$ in any given slot $\tau$. Thus, to enforce condition (iv), for every slot $\tau$ in $\sigma$, any task $T_i^j$ (of job $i$) executing on a processor $k (\not = i)$ in any slot $\tau$ needs be swapped with any task $T_m^n$ executing on processor $i$ in $\tau$ (or moved if processor $i$ is idle in $\tau)$. This procedure is executed repeatedly until all tasks in a given slot are executing on their correct processors, resulting in $\phi$.
    \item[]
    \item[$\Rightarrow$] A solution $\phi$ of RIFLES trivially satisfies the requirements for RCS, resulting in $\sigma$.

\end{itemize}
\end{proof}

\begin{theorem}[RIFLES and NP-completeness]
    RIFLES is NP-complete.
\end{theorem}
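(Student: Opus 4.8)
The plan is to combine the two lemmas already established in the excerpt. The theorem states that RIFLES is NP-complete, and by definition a problem is NP-complete precisely when it is both in NP and NP-hard. Since Lemma~\ref{lem:np} establishes that RIFLES is in NP, and Lemma~\ref{lem:nph} establishes that RIFLES is NP-hard, the theorem follows immediately by invoking the standard characterisation of NP-completeness. So the proof is essentially a one-line assembly step rather than a fresh argument.

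Concretely, I would first recall the definition: a decision problem is NP-complete if it belongs to NP and every problem in NP reduces to it in polynomial time (equivalently, it is NP-hard). I would then cite Lemma~\ref{lem:np} for membership in NP and Lemma~\ref{lem:nph} for NP-hardness, and conclude. I would also briefly note that NP-hardness was obtained via a polynomial-time reduction \emph{from} the known NP-complete problem RCS (resource-constrained scheduling), so that the NP-hardness of RIFLES is inherited transitively from the hardness of RCS.

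The only subtlety worth flagging is a direction/naming check. NP-hardness requires reducing a \emph{known} hard problem \emph{to} RIFLES (hard problem $\to$ RIFLES), whereas the proof of Lemma~\ref{lem:nph} uses the phrase ``reduce PCS to the RIFLES problem'' and sets up a mapping that, read carefully, constructs a RIFLES instance and shows solution-equivalence with an RCS instance. I would make sure the logical direction is stated consistently, namely that RCS (which is NP-complete by Garey--Johnson~\cite{garey1979npc}) reduces to RIFLES, so that a polynomial-time algorithm for RIFLES would yield one for RCS. This is the one place where an error could slip in, but it is a matter of bookkeeping rather than a genuine mathematical obstacle.

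Since both halves are already proved, there is no real obstacle remaining; the theorem is a corollary. A clean write-up would read roughly:

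\begin{proof}
By Lemma~\ref{lem:np}, RIFLES is in NP. By Lemma~\ref{lem:nph}, RIFLES is NP-hard, since RCS---which is NP-complete~\cite{garey1979npc}---reduces to RIFLES in polynomial time under the mapping exhibited in the proof of Lemma~\ref{lem:nph}. A problem that is both in NP and NP-hard is NP-complete; hence RIFLES is NP-complete.
\end{proof}
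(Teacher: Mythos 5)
Your proposal is correct and matches the paper's proof exactly: the paper also concludes NP-completeness directly from Lemma~\ref{lem:np} (membership in NP) and Lemma~\ref{lem:nph} (NP-hardness). Your additional remark about checking the reduction direction (RCS reduces \emph{to} RIFLES) is sound bookkeeping but does not change the argument.
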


\begin{proof}
    The proof follows from Lemmas~\ref{lem:np} and~\ref{lem:nph}.
\end{proof}


\section{System Design}\label{sect:systemdesign}

Given the complexity of client scheduling in FL, we introduce RIFLES, a customizable FL stack designed to support various scheduling strategies. RIFLES incorporates an availability forecasting layer on the server, leveraging periodic heartbeat messages from clients to monitor and predict their availability, as in distributed systems, supporting smarter long-term selection and scheduling decisions.  Figure~\ref{fig:RIFLES} illustrates the architecture and core components of RIFLES, which we detail step-by-step from A to E.
\begin{figure}
    \centering
    \includegraphics[width=1\linewidth]{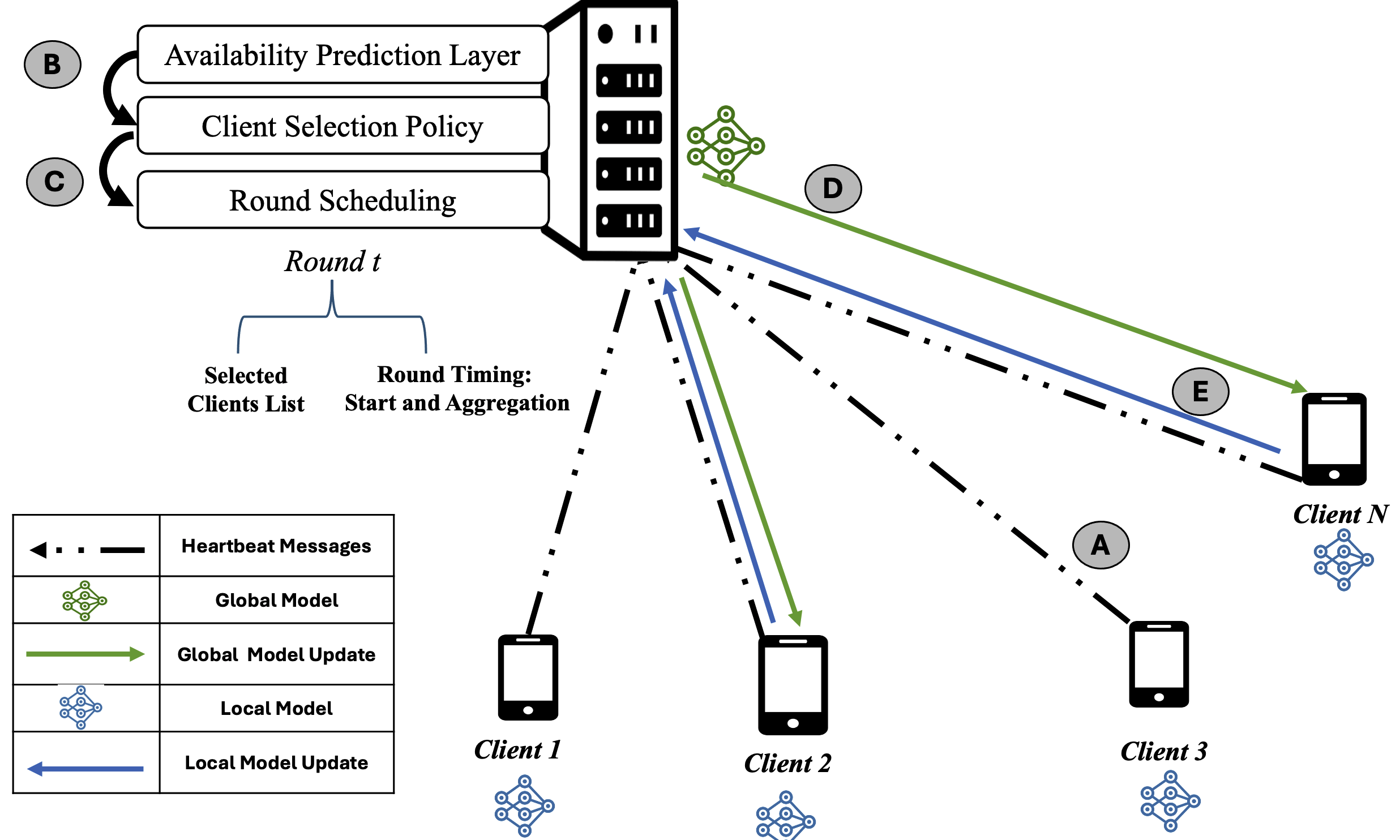}
    \caption{Overview of the RIFLES Framework.}
    \label{fig:RIFLES}
\end{figure}


\subsection{Availability Prediction Using Heartbeat Updates}

\subsubsection{\textbf{Availability Status Updates- Heartbeats Mechanism}}

In RIFLES, we propose \emph{novel} availability awareness mechanism by leveraging the fact that clients often send heartbeats to notify of their operational statuses. As such, we assume that, once clients decide to participate in FL training, they will periodically send their presence to the server in terms of periodic timestamped heartbeats. We use a client’s \(hb_{i}^{\text{t}} \) (heartbeat) as a proxy for the client \(i\)'s current availability status at given time \(t\), carrying its availability information as payload. The server sets \(hb_{i}^{\text{t}} = 1\), for client \(i\) at time \(t\) if the client reports being connected to WiFi, sufficiently charged and in an idle state\footnotetext{Idle status refers to a state where the device is not actively being used by the user to run other applications, allowing it to perform computations without disrupting their experience.}, else \(hb_{i}^{\text{t}} = 0\).  Due to issues, e.g., network conditions and client mobility, we assume that most heartbeats are transmitted reliably and only a small proportion \(\epsilon \) may be delayed or lost, capturing real-world conditions as RIFLES is intended to operate in a WiFi setting, :

\[
\frac{\text{Heartbeats not received}}{\text{Total heartbeats expected}} \leq \epsilon
\]
\noindent where \( \epsilon \) is a predefined threshold ensuring that heartbeat loss remains within acceptable limits to maintain prediction stability. To address lost heartbeats, timestamps are used to map them to the correct time slots in the daily matrix, ensuring accurate availability updates.

\subsubsection{\textbf{Daily Availability Matrix Structure}}
To comprehensively capture client availability over time, we represent it as a series of daily matrices $\mathcal{M}$, defined as:
\noindent \[
\mathcal{M} = \left\{ \mathbf{M}^{(1)}, \mathbf{M}^{(2)}, \dots, \mathbf{M}^{(d)} \right\}
\]  

where \(\mathcal{M}\) is the set of daily matrices, with each \( \mathbf{M}^{(d)} \) representing a specific day \( d \) in a total of \( D \) tracking days.
Each matrix \( \mathbf{M}^d \) is of dimensions \( S \times N \), where \( S \) is the total number of discrete time slots in a day\( d \) and \( N \), the number of clients being tracked. Each time slot in \( S \) represents a fixed interval \( \Delta t \) (e.g., 2 minutes), resulting in: \[
S = \frac{\text{Day duration (in minutes)}}{\Delta t} = \frac{1440}{\Delta t}  =\text{  720  slots per day.}
\]

The entry \( A_i^{d}(s) \) in the matrix \( M^d \) represents the availability status of client \( i \) at time slot \( s \) in day \( d \), where:

\[
A_i^{d}(s) = 
\begin{cases}
1, & \text{if client } i \text{ is "available" in slot } s \text{ on day } d. \\
0, & \text{if client } i \text{ is "unavailable" in slot } s \text{ on day } d.
\end{cases}
\]

 For example, the availability matrix \( \mathbf{M}^{(d)} \):
\small
\begin{center}
\scriptsize
\begin{tabular}{|c|c|c|c|c|c|}
\hline
\textbf{S} & \textbf{Client 1} & \textbf{Client 2} & \textbf{Client 3} & $\dots$ & \textbf{Client N} \\
\hline
1 & \( A_1^{1}(1) \) & \( A_2^{1}(1) \) & \( A_3^{1}(1) \) & $\dots$ & \( A_N^{1}(1) \) \\
2 & \( A_1^{1}(2) \) & \( A_2^{1}(2) \) & \( A_3^{1}(2) \) & $\dots$ & \( A_N^{1}(2) \) \\
3 & \( A_1^{1}(3) \) & \( A_2^{1}(3) \) & \( A_3^{1}(3) \) & $\dots$ & \( A_N^{1}(3) \) \\
$\vdots$ & $\vdots$ & $\vdots$ & $\vdots$ & $\ddots$ & $\vdots$ \\
S & \( A_1^{1}(S) \) & \( A_2^{1}(S) \) & \( A_3^{1}(S) \) & $\dots$ & \( A_N^{1}(S) \) \\
\hline
\end{tabular}
\normalsize
\end{center}
\normalsize
The value of \( A_i^{d}(s) \) is derived from the heartbeat signals received from client \(i\), as detailed next.

\subsubsection{\textbf{{Heartbeat Signal Processing}}}

To determine the time slot \( s \) corresponding to a heartbeat timestamp \( hb^{\text{t}} \), we use the formula \( s = \left\lceil \frac{t}{\Delta t} \right\rceil \); where \( t \) is the timestamp and \( \Delta t \) is the duration of each time slot.  The heartbeat belongs to the 60th time slot (\( s = 60 \)). Once a heartbeat is received, its status is considered valid for the next \( W_i \) time slots, where \( W_i \) is a client-specific validity window. Thus, the availability status \( A_i^{(d)}(s) \) is updated as follows:
\small
\begin{itemize}
    \item  If \( \text{hb}_i^{t} = 1 \) (client is available):
  
  \[
  A_i^{(d)}(s) = 1, \, \forall s \in [s, s + W_i]
  \]

 \item  If \( \text{hb}_i^{t} = 0 \) (client is (un)available):
  
  \[
  A_i^{(d)}(s) = 0, \, \forall s \in [s, s + W_i]
  \]

  \end{itemize} 
\normalsize
If a new heartbeat with the opposite status is received within the validity window, the availability status is updated beginning with the new heartbeat's time slot onwards. In general, we  can formalize the availability status \( A_i^{(d)}(s) \) as:
\small
\[
A_i^{(d)}(s) = 
\begin{cases}
\text{status of } hb_i^t, & \text{if } hb \text{ exists for all } s \in [s, s + W_i] \\
0, & \text{otherwise}
\end{cases}
\]
\normalsize

\subsection{Availability Prediction Layer}
 \subsubsection{\textbf{CNN-LSTM Model as a Time Series Prediction Model}}
The server processes the heartbeat data as daily matrices \(\mathcal{M}\) through the Availability Prediction Layer, utilizing a CNN-LSTM model to identify clients' availability for the next day. The CNN-LSTM model architecture consists of a combination of (CNN) and (LSTM) networks. The CNN is used to capture spatial patterns in client availability across time slots within a day, while the LSTM captures temporal dependencies across multiple days.  
\begin{itemize}
    \item Spatial Feature Extraction with CNN:
     For each day \( d \), the CNN processes \( \mathbf{M}^{(d)} \) to extract spatial features \( \mathbf{F}^{(d)} \). This  captures local patterns of client availability within the day, such as peak availability times. 
    \item Temporal Dependency Modeling with LSTM: 
    After that, the sequence of feature vectors is  utilise as input into the LSTM network to model temporal dependencies across days.    The sequence \( \{ \mathbf{f}^{(1)}, \mathbf{f}^{(2)}, \dots, \mathbf{f}^{(D)} \} \) is input into the LSTM. 
\end{itemize}

The model takes availability matrices from the past \( d \) days as input, leveraging temporal information to predict future availability \( PA \) for clients for the next day.  

\begin{center}
\scriptsize
\begin{tabular}{|c|c|c|c|c|c|}
\hline
\textbf{S} & \textbf{Client 1} & \textbf{Client 2} & \textbf{Client 3} & $\dots$ & \textbf{Client N} \\
\hline
1 & \( PA_1^{1}(1) \) & \( PA_2^{1}(1) \) & \( PA_3^{1}(1) \) & $\dots$ & \( PA_N^{1}(1) \) \\
2 & \( PA_1^{1}(2) \) & \( PA_2^{1}(2) \) & \( PA_3^{1}(2) \) & $\dots$ & \( PA_N^{1}(2) \) \\
3 & \( PA_1^{1}(3) \) & \( PA_2^{1}(3) \) & \( PA_3^{1}(3) \) & $\dots$ & \( PA_N^{1}(3) \) \\
$\vdots$ & $\vdots$ & $\vdots$ & $\vdots$ & $\ddots$ & $\vdots$ \\
S & \( PA_1^{1}(S) \) & \( PA_2^{1}(S) \) & \( PA_3^{1}(S) \) & $\dots$ & \( PA_N^{1}(S) \) \\
\hline
\end{tabular}
\normalsize
\end{center}

\subsubsection{\textbf{Responded Duration}}
A client's eligibility for a training round depends on its response time and availability duration, ensuring compatibility with device heterogeneity and varying capabilities. Let \( C_{\text{expected}}(i) \) denote the expected duration for client \( i \) to complete local training and return its update, estimated by averaging its response times from past participation rounds, as follows:
\small
\[
C_{\text{expected}}(i) =
\begin{cases}
\frac{1}{n} \sum_{j \in S_i} C_{\text{response}}(i, j), & \text{if } n > 0 \\
C_{\text{init}}, & \text{if } n = 0
\end{cases}
\]
\normalsize

where, $C_{\text{expected}}(i)$ is the expected responded (computation and communication ) duration for client $i$; $n$ is the number of rounds the client participated in. $C_{\text{response}}(i, j)$ is the response time in round $j$ and $C_{\text{init}}$ is the initial response time estimate when $n = 0$.  Thus, continuously tracking and updating clients' training duration is vital for constructing the eligibility matrix, as it relies on both the prediction matrix and the clients' training duration.

\subsubsection{\textbf{Eligibility Matrix Construction}}
The methodology for evaluating client eligibility at a given slot \( s \) for executing a training task is detailed in this section. This process is critical for identifying optimal slots throughout the day for training and selecting the most eligible set of clients for participation, as illustrated in Figure~\ref{fig:Pipeline}. This  eligibility matrix plays a pivotal role in enabling the server to efficiently schedule clients for the following day, ensuring effective client selection and load balancing in federated training over time.

\begin{figure}[H]
    \centering
    \includegraphics[width=1\linewidth]{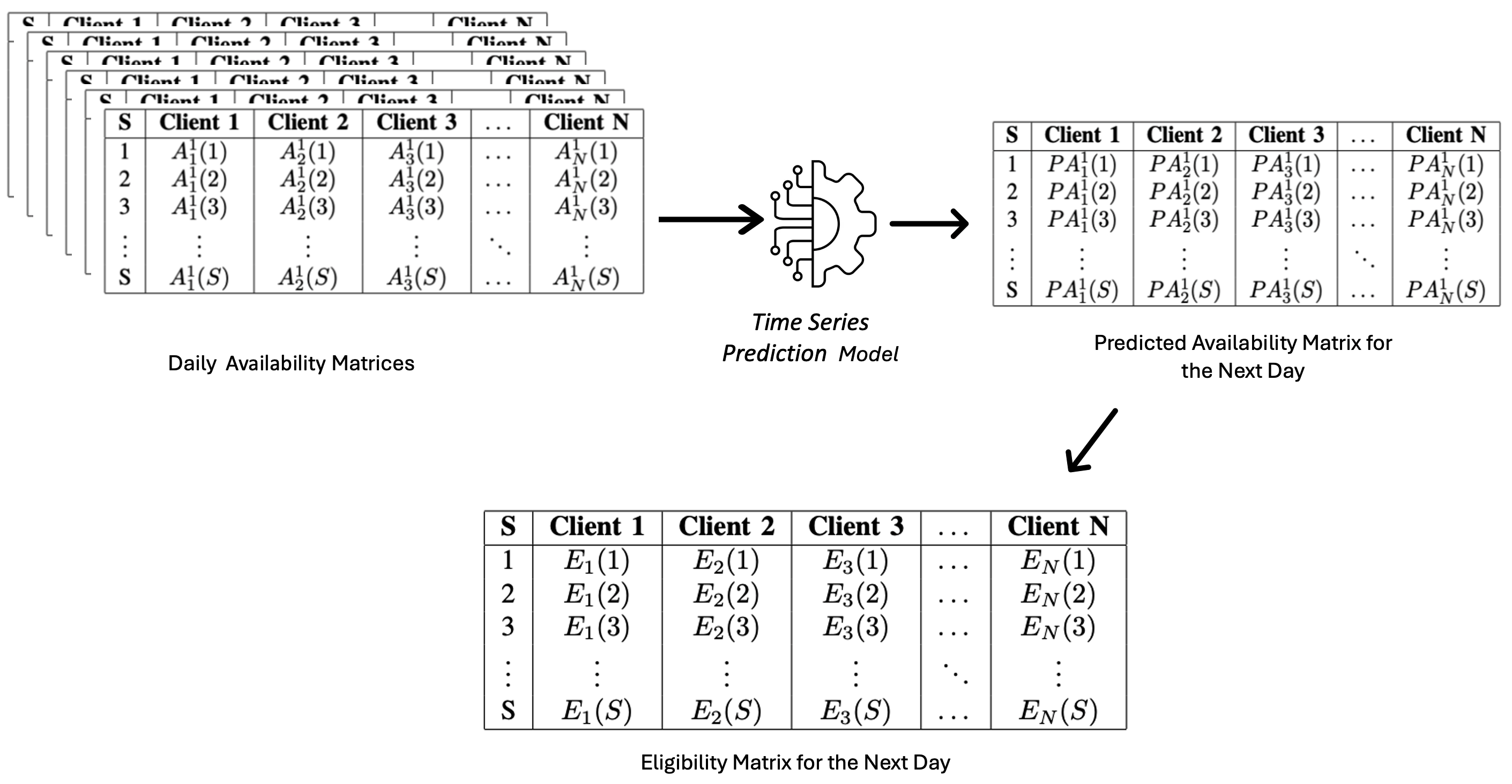}
    \caption{Pipeline for Generating the Eligibility Matrix.}
    \label{fig:Pipeline}
\end{figure}

Considering the prediction matrix and response duration, the following steps are performed for each client \( i \in N \) to generate the eligibility matrix for the next day:
\begin{itemize}
    \item For \( s = 0 \) to \( S \), we calculate the number of consecutive slots from \( s \) onward during which the client is predicted to remain available \(PA =1\). This predicted availability window is denoted as \( \Lambda_i^s \) for client \( i \), representing the number of slots during which the client is expected to stay available after slot \( s \).
    
    \item To ensure that client \( i \) can perform the training task, \( \Lambda_i^s \), the predicted availability window starting from \( s \), must be sufficient. This depends on the expected duration required for the task \( C_{\text{expected}}(i) \), which is specific to client \( i \), plus a constant buffer \( k \) to account for potential communication delays and network variability. The client \( i \) is considered eligible at slot \( s \) if the following condition holds:

\[
 \Lambda_i^s  \geq C_{\text{expected}}(i) + k \Rightarrow E_i(s) = 1
\]

If this condition is met, the eligibility indicator \( E_i(s) \) is set to 1, indicating that client \( i \) is available and may be selected for training at slot \( s \) and \(E_i(s) = 1\). In contrast, if the condition is not met (i.e., if \( \Lambda_i^s \) is too short and not covered \( C_{\text{expected}}(i) \)), the eligibility indicator \( E_i(s) \) is set to 0, indicating that client \( i \) is not available for training at slot \( s \):

\[
\Lambda_i^s  < C_{\text{expected}}(i) + k \Rightarrow E_i(s) = 0
\]
\end{itemize}

Be doing that, we convert the predicted availability matrix into a more informative matrix, the Eligibility Matrix.

\begin{center}
\scriptsize
\begin{tabular}{|c|c|c|c|c|c|}
\hline
\textbf{S} & \textbf{Client 1} & \textbf{Client 2} & \textbf{Client 3} & $\dots$ & \textbf{Client N} \\
\hline
1 & \( E_1(1) \) & \( E_2(1) \) & \( E_3(1) \) & $\dots$ & \( E_N(1) \) \\
2 & \( E_1(2) \) & \( E_2(2) \) & \( E_3(2) \) & $\dots$ & \( E_N(2) \) \\
3 & \( E_1(3) \) & \( E_2(3) \) & \( E_3(3) \) & $\dots$ & \( E_N(3) \) \\
$\vdots$ & $\vdots$ & $\vdots$ & $\vdots$ & $\ddots$ & $\vdots$ \\
S & \( E_1(S) \) & \( E_2(S) \) & \( E_3(S) \) & $\dots$ & \( E_N(S) \) \\
\hline
\end{tabular}
\normalsize
\end{center}

\subsection{Adaptive Selection and Scheduling Policy}

In RIFLES, selection and scheduling are critical processes that involve deciding which clients will participate in contributing their local updates to the global model based on criteria like availability, capabilities, historical participation or fairness when to optimally execute these training rounds. 

In this section, we present two adaptive selection and scheduling policies to showcase the customisable nature of RIFLES: \textbf{Greedy Heuristics (GH)} and  \textbf{Least Recently Used (LRU)}. Both policies utilize the eligibility matrix
that aims to enhance participation rates, accelerate convergence minimise dropout rates, thereby improving the overall efficiency of FL system

\subsubsection{\textbf{Greedy Heuristic (GH) Policy for Client Selection and Slot Scheduling}}
The Greedy Heuristic (GH) policy is designed to select clients and allocate time slots for a predetermined number of training rounds \(R\) (e.g., 24 rounds per day). Its goal is to maximize client participation, maintain a minimum gap \(G\) between consecutive rounds (e.g., 2 slots) and prioritize the inclusion of as many unique clients as possible to promote data diversity. This approach prioritizes availability while striving to optimize client diversity and participation rates within the scheduling constraints. Therefore, as inputs we had (i) Eligibility matrix, the binary matrix indicating whether client \( i \) is eligible at slot \( s \). (ii) Minimum gap \( G \),  The minimum number of slots required between two selected training slots.
(iii) Threshold for participation number\( K_{\text{min}} \), which indicates to the minimum number of clients required  to perform  a training round. Its works as follows: Firstly, calculates the number of eligible clients for each time slot \( s \) by summing the eligible clients across all slots. This step helps prioritize slots with the highest potential participation for scheduling training rounds.

\[
    \text{Eligible Clients}(s) = \sum_{i=1}^N E(i, s)
\]



   After that, we sort slots by count in descending order, based on the count of Eligible Clients. We then select slots with a gap constraint, by initializing an empty list for selected slots and we then iterate over the sorted list of slots. For each slot \( s \), we check if it’s at least \( G \) slots away from all previously selected slots.  If it satisfies this condition and has enough eligible clients\((\text{Eligible Clients}(s) \geq K_{\text{min}})\), we add that slot to the selected slots list. This is continued until a sufficient number of slots is selected or no further slot meets the criteria. After selecting the optimal slots, we calculate the total number of distinct clients covered across these slots. If not all of the clients are included, we adjust the threshold or gap between rounds to maximize the participation of as many unique clients as possible. The unique clients are defined as those with a low eligibility rate, meaning that those whose number of eligible slots falls below a specified threshold \(\alpha\).
    
\[
U = \{ i \in N \mid |\text{EligibleSlots}_i| < \alpha \}
\]
where \( U \) represents the set of unique clients, \( N \) is the total set of clients, \( \text{EligibleSlots}_i \) is the set of eligible slots for client \( i \) and \( \alpha \) is the predefined threshold for eligibility.  These clients are often overlooked in existing client selection approaches due to their sporadic presence. By including them, the policy promotes diversity in client selection, enabling the global model to benefit from underrepresented data distributions and preventing over-reliance on frequently available clients. Finally, compute aggregation time for each round  \( s \in \mathcal{S} \) as $
\text{Agg}_s = \max_{i \in s} C_{\text{expected}}(i), \quad \forall s \in \mathcal{S}$.

\normalsize

\subsubsection{\textbf{Least Recently Used (LRU) for Clients Selection}}

It maintains a dynamic cache to track the order of client participation, ensuring that clients with longer periods of inactivity are given higher priority for selection. Its works as follows: Establish an LRU cache \(\mathcal{Q}\) as a deque with a maximum length of the entire number of clients. Initially, all clients are added to the cache in with the least recently used clients at the beginning. After that, Sort slots by eligible clients in descending order and select slots with gap constraint  For each selected slot \(s\), determine the eligible clients from eligibility matrix \(E\): 
    \[E_s = \{ i \mid E(s,i) = 1 \}\]
   
Then,  filter \( E_s \) based on the LRU order by selecting only those clients who are in \( \mathcal{Q} \). This subset of least recently used eligible clients is denoted as \( L_s \):
\[
L_s = \{ i \in \mathcal{Q} \mid i \in E_s \}
\]
Select up to $K_{\text{min}}$ clients from \( L_s \), prioritizing those at the front of the LRU cache \( \mathcal{Q}\), as they are the least recently used:
\[
\Lambda_s = L_s[:K_{\text{min}}]
\]
 where, $K_{\text{min}}$ is the required number of clients to participate in the current round. Compute the aggregation time for each round \( s \in \mathcal{S} \) as $
\text{Agg}_s = \max_{i \in s} C_{\text{expected}}(i), \quad \forall s \in \mathcal{S}$.

\subsection{Model Distribution and Aggregation}

The server distributes the latest global model to selected clients, who perform local training on their private data.  After training, clients send their updates back to the server, where they are aggregated to form an updated global model.

\section{Experiment Setup}\label{sect:exptsetup}
\subsection{System Implementation}
We conducted our experiments utilizing the FedScale framework~\cite{lai2022fedscale, abdelmoniem2021resource} and leveraged an NVIDIA GPU cluster, approx. 22GB GPU RAM, CUDA compute capability 7.5, to simulate client training processes using PyTorch.  Similar to REFL \cite{abdelmoniem2021resource}, we employed the YoGi optimizer \cite{reddi2020adaptive} for model aggregation. For the system training configuration, we set the number of clients to \(N = 100\) with a client participation rate of \(0.1\) across \(1k\) communication rounds. Regarding the model-specific training setups, for the WISDM dataset, we trained a CNN using a batch size \(B\) of 32, a learning rate \(\gamma=0.005\) and a local training frequency of \(E = 1\). For the CIFAR-10 dataset, we trained a ResNet-18 model with a \(B=32\), a  \(\gamma=0.05\), and \(E = 1\).
\subsubsection{Simulating System Heterogeneity}

 To simulate computational heterogeneity among clients, we adopted a regression-based approach inspired by prior work \cite{yang2022flash, zhang2021nn}. We collected hardware performance data from three representative devices to train a regression model to estimates training speeds. For communication heterogeneity, we used a dataset from \cite{yang2022flash} containing down/upstream bandwidth measurements between 30 devices and the server.

 \subsubsection{Simulating Availability Heterogeneity}
 
 To generate realistic availability patterns for clients over a week, we generated day 1 availability, as reference day, by create random availability patterns for each client, with an increased availability factor of 1.5× during nighttime hours (10 PM to 6 AM) to reflect typical user behavior in real-world, where most individuals tend to charge their devices at night and not used them. Then, for subsequent days, we introduced a 20\% chance for each client to change their availability status hourly, simulating natural fluctuations in user behavior. Moreover, we  introduced random short-term unavailability periods (e.g., clients going offline for 10 minutes after being online for 30 seconds) to mimic brief connectivity losses. Finally, each client was randomly assigned a state trace and hardware capacity to mimic real-world variability.

\subsection{Application, Datasets and Models}

We evaluate the performance across two application domains: human activity recognition (HAR) and image classification. For HAR, we employ the WISDM dataset~\cite{weiss2019wisdm}, where each client receives samples from all 6 activity classes with varying class distributions, simulating realistic and diverse usage conditions. For image recognition, we utilize the CIFAR-10 dataset~\cite{krizhevsky2014cifar10}, introducing more severe heterogeneity by applying a filter class ratio of 0.7 of 10 classes to further challenge the learning process under non-IID settings.

\subsection{Baselines} To evaluate the performance of RIFLES, we compare RIFLES against three baselines:

\begin{itemize}
    \item Random~\cite{mcmahan2017communication}: This selection approach utilised by FedAvg, the most classical synchronous approach, where $\lceil \beta \times N \rceil$ clients randomly selected during the selection phase.

   \item  FedCS~\cite{nishio2019client}: A framework that addresses  heterogeneity by selecting $\lceil \beta \times N \rceil$ clients, collecting resource information and estimating speeds to select those capable of downloading, training and uploading within the deadline.

   \item REFL~\cite{abdelmoniem2023refl}: A state-of-the-art FL framework that addresses resource and availability heterogeneity by evaluating device performance and leveraging client-reported availability for selection. It selects  \(\lceil \beta \times N \rceil\) clients, prioritizing those likely to be unavailable in upcoming rounds.
       
\end{itemize}

\subsection{Evaluation Metrics}
To evaluate the overall performance of RIFLES vs baseline approaches, we utilize the evaluation metrics include Accuracy@Deadline, measuring global model accuracy after 1k rounds; Round of Arrival (RoA@x), indicating how quickly the model reaches a specified accuracy; Completion Rate, the percentage of clients successfully submitting updates by deadline; Successful Rate, the percentage of clients completing training tasks within each round; Dropout Rate, the percentage of selected clients failing tasks and Unique Client Participation Count, representing clients newly participating after a defined number of previous rounds (e.g., 3 rounds).

\section{Results}\label{sect:results}

\begin{table}
\centering
\scriptsize
\caption{Round of Arrival (RoA) at different accuracy thresholds (50\%, 75\% and 90\%) and the final accuracy.}
\setlength{\tabcolsep}{2pt} 
\renewcommand{\arraystretch}{1.1} 
\begin{tabular}{ccccc}
\hline
\textbf{Method} & \multicolumn{4}{c}{\textbf{WISDM Dataset}} \\ \cline{2-5} 
                & \textbf{RoA@50\%} & \textbf{RoA@75\%} & \textbf{RoA@90\%} & \textbf{Acc@Deadline} \\ \hline
Random          & 15                & 98                & nan              & 72\%                  \\ 
FedCS           & 13                & 59                & nan              & 82\%                  \\ 
REFL            & 9                 & 44                & 326              & 90\%                  \\ 
RIFLES (LRU)    & 3                 & 16                & 138              & 95\%                  \\ 
RIFLES (GH)     & 3                 & 7                 & 133              & 95\%                  \\ 
\hline
\end{tabular}
\normalsize
\label{tab:toa_wisdm}
\end{table}
\begin{figure*}[!t]
    \centering
    \subfloat[\small WISDM dataset]{%
        \includegraphics[width=0.48\linewidth]{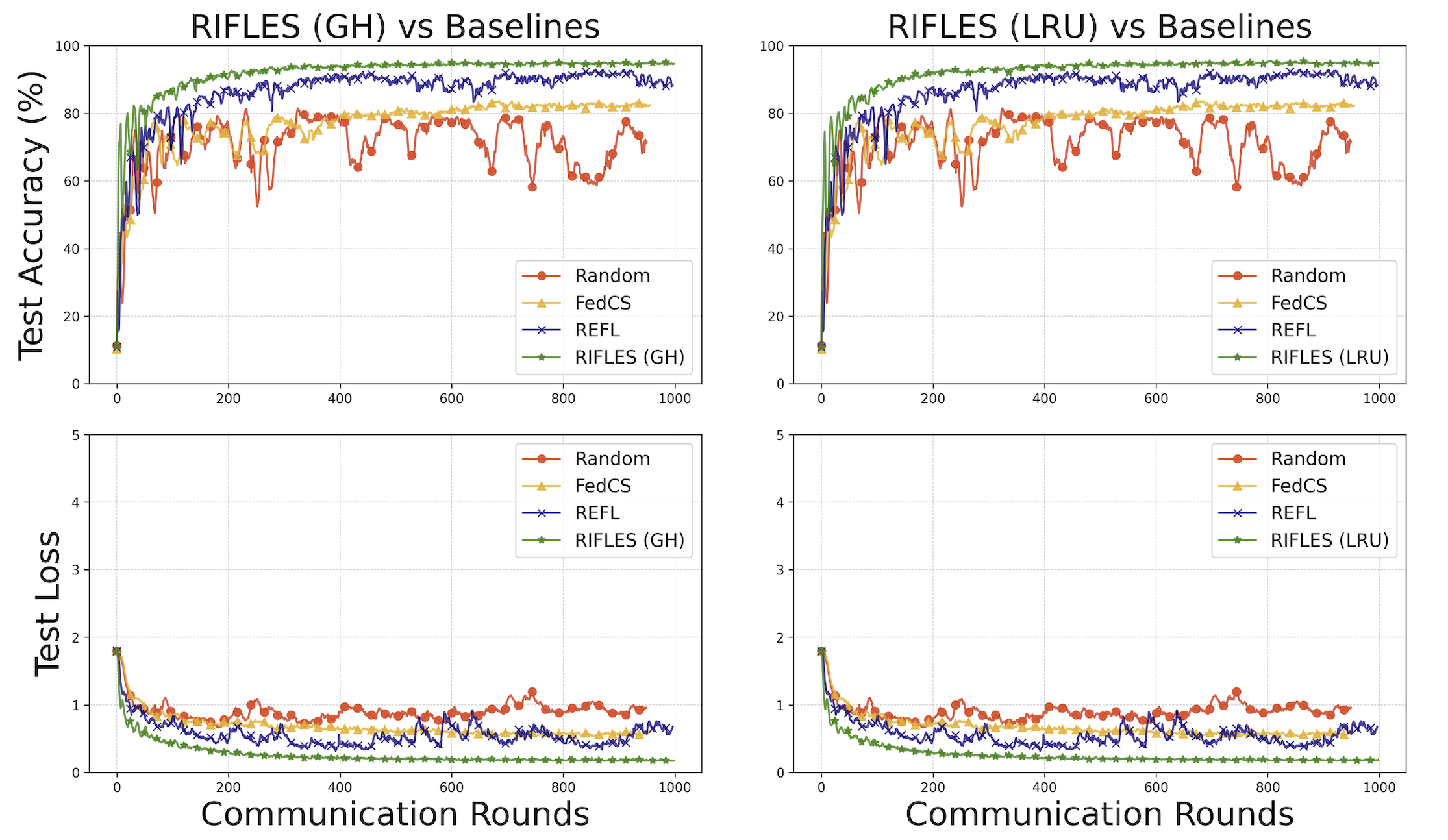}
        \label{fig:har_comparison_accuracy_loss_rifles}
    }
    \hfill
    \subfloat[\small CIFAR-10 dataset]{%
        \includegraphics[width=0.48\linewidth]{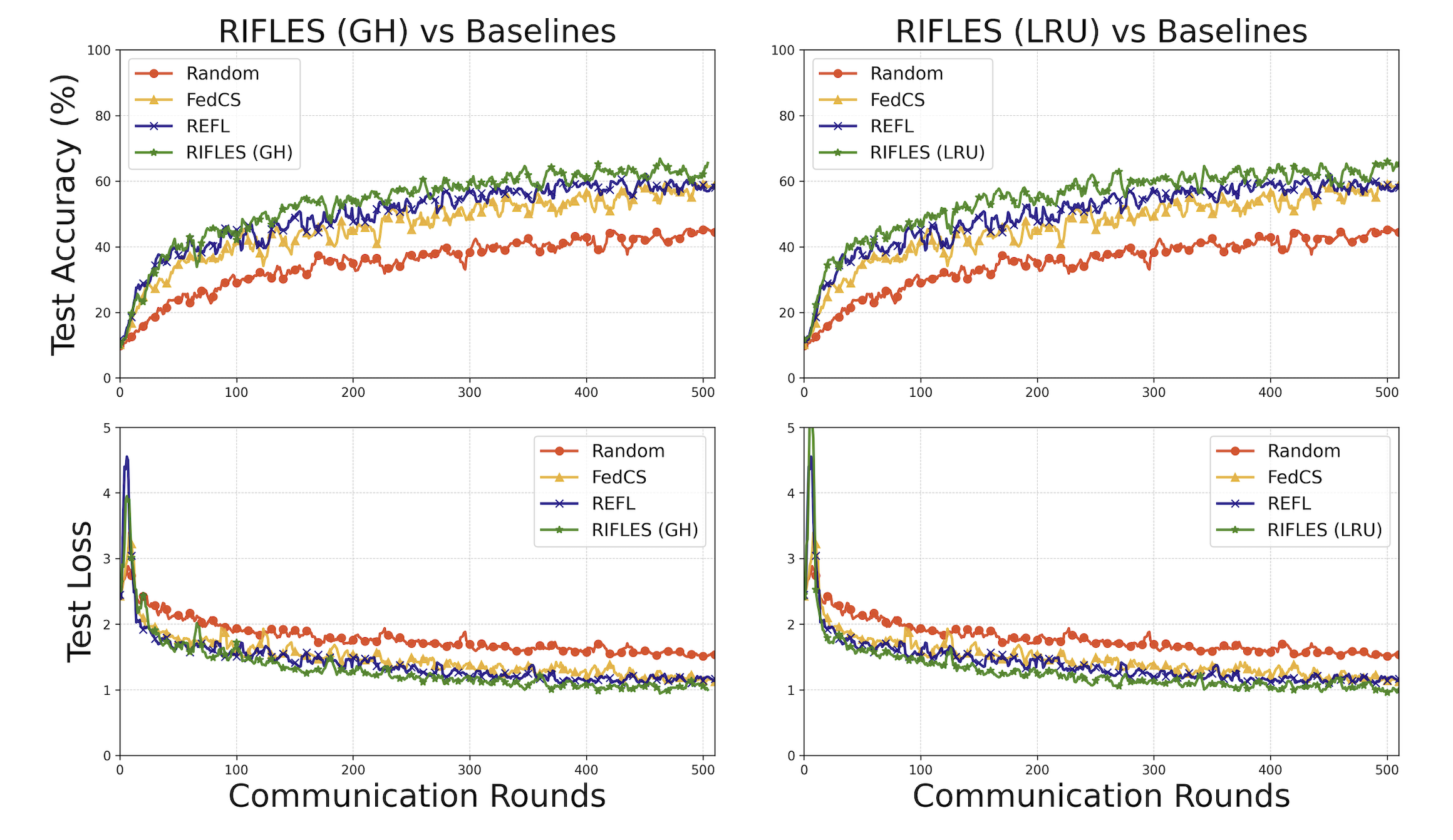}
        \label{fig:cv_comparison_accuracy_loss_rifles}
    }
    \caption{\small Comparison of test accuracy and test loss of RIFLES against baseline models (Random, FedCS, REFL).}
    \label{fig:comparison_accuracy_loss_rifles}
\end{figure*}

\subsection{Training Model Performance}
Fig.~\ref{fig:comparison_accuracy_loss_rifles} presents the test accuracy and loss over 1K rounds on WISDM and 500 rounds on CIFAR-10 for RIFLES (GH, LRU) and baseline methods, shown in the top and bottom rows, respectively.  As illustrated in Fig.~\ref{fig:har_comparison_accuracy_loss_rifles},  RIFLES (GH) achieves 94.6\% accuracy within 500 rounds, outperforming FedCS (82.5\%) and REFL (88.3\%) on WISDM, while RIFLES (LRU) further improves to 95\%; random selection lags at 71.7\%. On CIFAR-10, as shown in Fig.~\ref{fig:cv_comparison_accuracy_loss_rifles}, RIFLES (GH) reaches around 67\% and RIFLES (LRU) 69\%, compared to FedCS 59\%, REFL 60\% and random selection 47\%. In both datasets, RIFLES demonstrate faster convergence, higher stability and reduced overfitting risks, even under the relatively more non-IID settings of CIFAR-10.  Generally speaking, RIFLES in both  variants continuously maintains a 5–10\% superior accuracy and lower losses over all rounds than REFL and FedCS. Table~\ref{tab:toa_wisdm} illustrated that RIFLES achieves target accuracy levels significantly faster than REFL and FedCS, on WISDM. For example, RIFLES (GH) reaches 75\% accuracy in just 7 rounds and 90\% in 133 rounds, compared to REFL’s 44 and 326 rounds, respectively. which demonstrate how well RIFLES works to produce high-quality models with significantly fewer communication rounds.


\begin{figure}[ht]
    \centering
    \includegraphics[width=0.9\linewidth, height=0.98\linewidth]{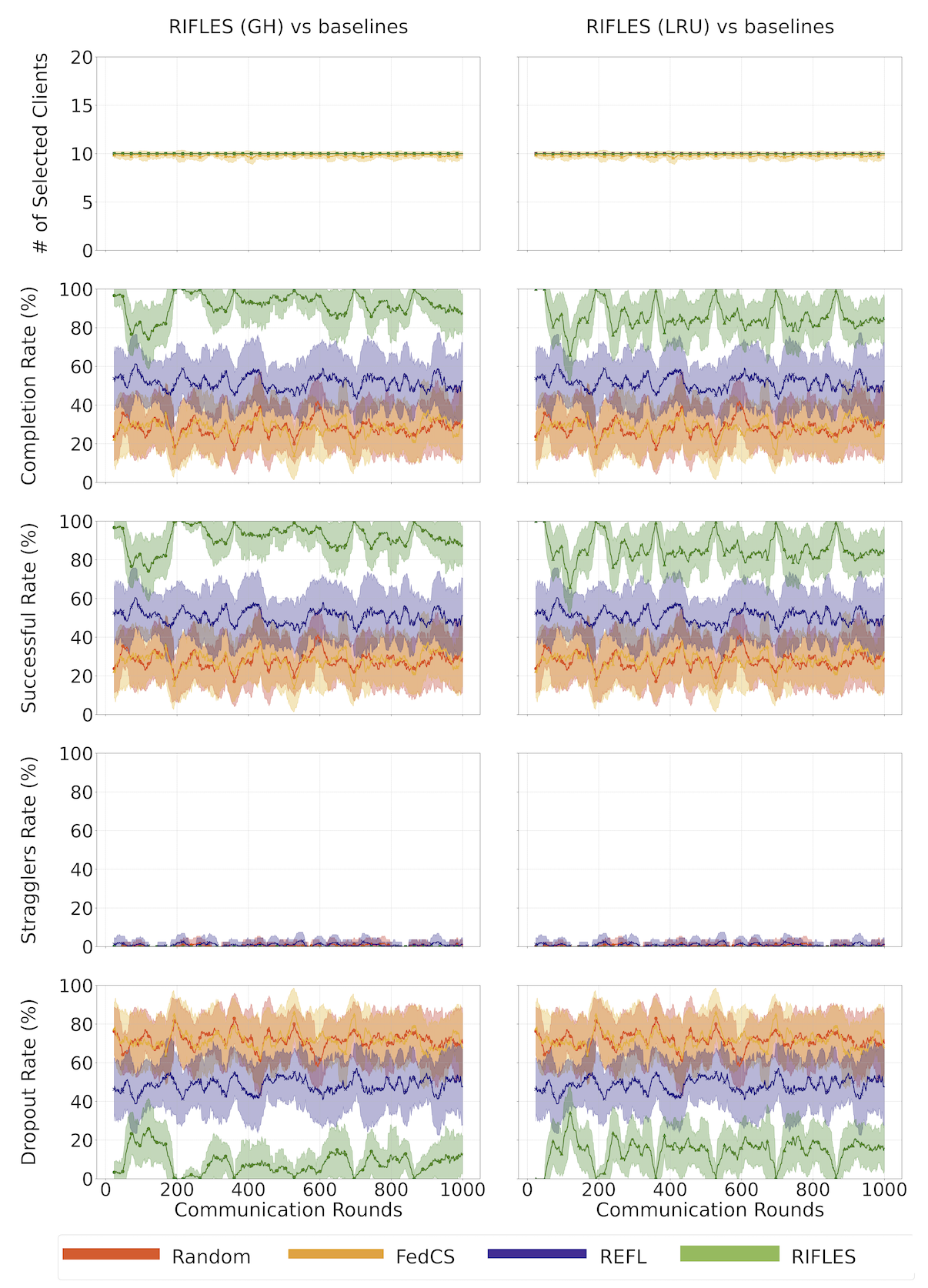}
    \caption{ Comparison of RIFLES and baselines across metrics over communication rounds.
\small Each point shows the rolling mean over 24 rounds; shaded areas represent the standard deviation.}
    \label{fig:rifles_vs_baselines}
\end{figure}

\subsection{Resource Efficiency and System Robustness}

Since our results are derived from emulation, We emulate resource usage using client counts and total time per round, tracking client outcomes and summing computation and communication times. Figure~\ref{fig:rifles_vs_baselines} illustrates the performance of RIFLES (GH) and (LRU) in comparison to baseline methods (Random, FedCS and REFL) across resource efficiency criteria.  Although RIFLES (GH) and (LRU) continuously sustain a stable count of selected clients per round (about 10), ensuring equitable participation, both variations attain superior completion rates, averaging over 85\% with negligible fluctuation, surpassing all baseline approaches.

\begin{figure*}[t]
    \centering
    \includegraphics[width=\textwidth]{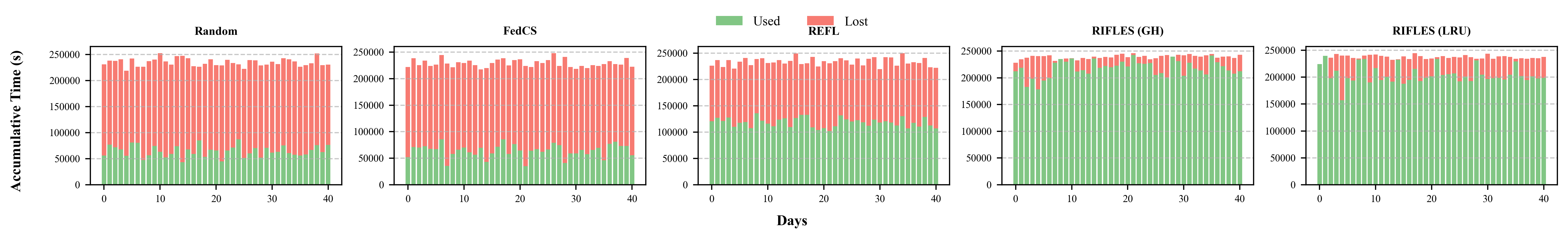}
    \caption{Daily accumulative time distribution across methods.}
    \label{fig:time_comparison}
\end{figure*}

Figure~\ref{fig:rifles_vs_baselines}, showing that both, (GH) and (LRU), maintain a stable client count per round (~10) and achieve high completion rates exceeding 85\%. Conversely, Random and FedCS demonstrate reduced and more fluctuating completion rates. This inefficiency results in the squandered computing and communication resources, as the contributions from uncompleted clients fail to enhance the global model. Moreover, RIFLES attains the lowest dropout rates of all approaches, remaining consistently below 50-60\%, while dropout rates for Random and FedCS often surpass 90\% in some rounds.

\begin{figure}
    \centering
    \includegraphics[width=0.9\linewidth]{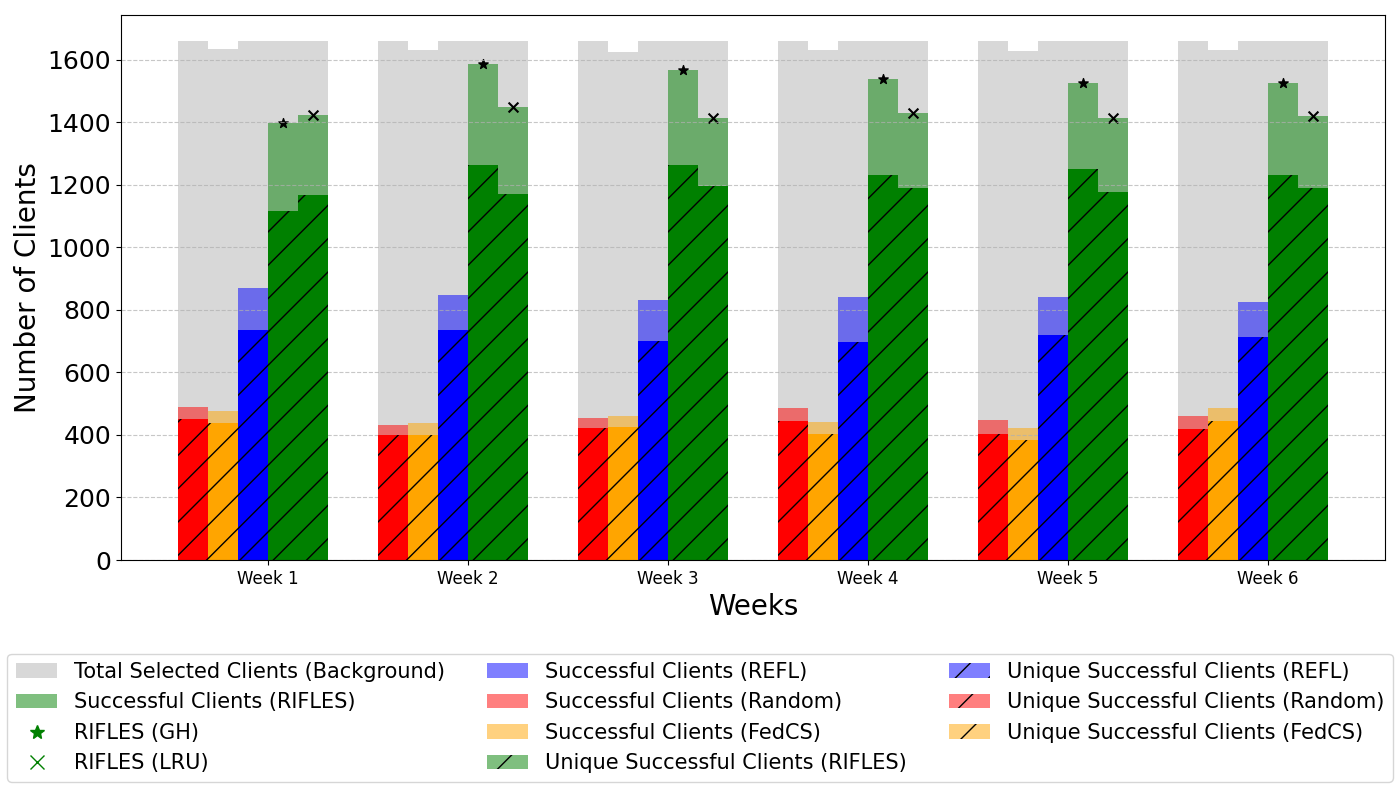}
    \caption{Weekly distribution of selected clients, successful clients and unique successful clients across methods.}

   \label{fig:rifles_vs_baselines2}
\end{figure}

Figures~\ref{fig:rifles_vs_baselines2} demonstrate RIFLES' superior client scheduling, showing higher success rates than baselines  while maintaining diverse participation by including unique clients. This strategy guarantees that clients possess sufficient time to fresher local data before re-joining and broader client involvement, helping reduce model bias in heterogeneous settings. Figure~\ref{fig:time_comparison} depicts the daily cumulative time,; while Random and FedCS suffer from high lost time, REFL reduces it significantly. However, RIFLES (GH) and RIFLES (LRU) achieve minimal lost time, highlighting their efficiency in selecting reliable clients.

\bibliographystyle{IEEEtran}

\bibliography{Bibliography.bib}

\begin{thebibliography}{10}
\providecommand{\url}[1]{#1}
\csname url@samestyle\endcsname
\providecommand{\newblock}{\relax}
\providecommand{\bibinfo}[2]{#2}
\providecommand{\BIBentrySTDinterwordspacing}{\spaceskip=0pt\relax}
\providecommand{\BIBentryALTinterwordstretchfactor}{4}
\providecommand{\BIBentryALTinterwordspacing}{\spaceskip=\fontdimen2\font plus
\BIBentryALTinterwordstretchfactor\fontdimen3\font minus \fontdimen4\font\relax}
\providecommand{\BIBforeignlanguage}[2]{{%
\expandafter\ifx\csname l@#1\endcsname\relax
\typeout{** WARNING: IEEEtran.bst: No hyphenation pattern has been}%
\typeout{** loaded for the language `#1'. Using the pattern for}%
\typeout{** the default language instead.}%
\else
\language=\csname l@#1\endcsname
\fi
#2}}
\providecommand{\BIBdecl}{\relax}
\BIBdecl

\bibitem{chen2017performance}
Y.~Chen and C.~Shen, ``Performance analysis of smartphone-sensor behavior for human activity recognition,'' \emph{Ieee Access}, vol.~5, pp. 3095--3110, 2017.

\bibitem{khan2022human}
I.~U. Khan, S.~Afzal, and J.~W. Lee, ``Human activity recognition via hybrid deep learning based model,'' \emph{Sensors}, vol.~22, no.~1, p. 323, 2022.

\bibitem{baines2024patient}
R.~Baines, S.~Stevens, D.~Austin, K.~Anil, H.~Bradwell, L.~Cooper, I.~D. Maramba, A.~Chatterjee, and S.~Leigh, ``Patient and public willingness to share personal health data for third-party or secondary uses: systematic review,'' \emph{Journal of medical Internet research}, vol.~26, p. e50421, 2024.

\bibitem{allegri2025collecting}
C.~Allegri, A.~Avellone, E.~B. di~Belgojoso, S.~Khatab, D.~Pescini, S.~Rimoldi, and A.~Zambon, ``Collecting data on migrants’ health status and access to health services: the experience of the mobile app “comestai”,'' \emph{Rivista Italiana di Economia Demografia e Statistica}, pp. 49--58, 2025.

\bibitem{mcmahan2017communication}
B.~McMahan, E.~Moore, D.~Ramage, S.~Hampson, and B.~A. y~Arcas, ``Communication-efficient learning of deep networks from decentralized data,'' in \emph{Artificial intelligence and statistics}.\hskip 1em plus 0.5em minus 0.4em\relax PMLR, 2017.

\bibitem{bonawitz2019konecny}
K.~Bonawitz, H.~Eichner, W.~Grieskamp, D.~Huba, A.~Ingerman, V.~Ivanov, and C.~Kiddon, ``Konecny, j., mazzocchi, s., mcmahan, hb, et al.: Towards federated learning at scale: System design,'' \emph{arXiv preprint arXiv:1902.01046}, 2019.

\bibitem{konevcny2016federated}
J.~Kone{\v{c}}n{\`y}, ``Federated learning: Strategies for improving communication efficiency,'' \emph{arXiv preprint arXiv:1610.05492}, 2016.

\bibitem{grama2020robust}
M.~Grama, M.~Musat, L.~Mu{\~n}oz-Gonz{\'a}lez, J.~Passerat-Palmbach, D.~Rueckert, and A.~Alansary, ``Robust aggregation for adaptive privacy preserving federated learning in healthcare,'' \emph{arXiv preprint arXiv:2009.08294}, 2020.

\bibitem{long2020federated}
G.~Long, Y.~Tan, J.~Jiang, and C.~Zhang, ``Federated learning for open banking,'' in \emph{Federated learning: privacy and incentive}.\hskip 1em plus 0.5em minus 0.4em\relax Springer, 2020, pp. 240--254.

\bibitem{kairouz2021advances}
P.~Kairouz, H.~B. McMahan, B.~Avent, A.~Bellet, M.~Bennis, A.~N. Bhagoji, K.~Bonawitz, Z.~Charles, G.~Cormode, R.~Cummings \emph{et~al.}, ``Advances and open problems in federated learning,'' \emph{Foundations and trends{\textregistered} in machine learning}, vol.~14, no. 1--2, pp. 1--210, 2021.

\bibitem{abdelmoniem2021resource}
A.~M. Abdelmoniem, A.~N. Sahu, M.~Canini, and S.~A. Fahmy, ``Resource-efficient federated learning,'' \emph{arXiv preprint arXiv:2111.01108}, 2021.

\bibitem{chen2020asynchronous}
Y.~Chen, Y.~Ning, M.~Slawski, and H.~Rangwala, ``Asynchronous online federated learning for edge devices with non-iid data,'' in \emph{2020 IEEE International Conference on Big Data (Big Data)}.\hskip 1em plus 0.5em minus 0.4em\relax IEEE, 2020, pp. 15--24.

\bibitem{wang2022asynchronous}
Z.~Wang, Z.~Zhang, Y.~Tian, Q.~Yang, H.~Shan, W.~Wang, and T.~Q. Quek, ``Asynchronous federated learning over wireless communication networks,'' \emph{IEEE Transactions on Wireless Communications}, vol.~21, no.~9, pp. 6961--6978, 2022.

\bibitem{nishio2019client}
T.~Nishio and R.~Yonetani, ``Client selection for federated learning with heterogeneous resources in mobile edge,'' in \emph{Proc. IEEE international Conference on Communications (ICC)}.\hskip 1em plus 0.5em minus 0.4em\relax IEEE, 2019, pp. 1--7.

\bibitem{abdelmoniem2023refl}
A.~M. Abdelmoniem, A.~N. Sahu, M.~Canini, and S.~A. Fahmy, ``Refl: Resource-efficient federated learning,'' in \emph{Proceedings of the Eighteenth European Conference on Computer Systems}, 2023, pp. 215--232.

\bibitem{kim2021dynamic}
Y.~Kim, E.~Al~Hakim, J.~Haraldson, H.~Eriksson, J.~M.~B. da~Silva, and C.~Fischione, ``Dynamic clustering in federated learning,'' in \emph{ICC 2021-IEEE International Conference on Communications}.\hskip 1em plus 0.5em minus 0.4em\relax IEEE, 2021, pp. 1--6.

\bibitem{mayhoub2024review}
S.~Mayhoub and T.~M.~Shami, ``A review of client selection methods in federated learning,'' \emph{Archives of Computational Methods in Engineering}, vol.~31, no.~2, pp. 1129--1152, 2024.

\bibitem{chen2020optimal}
W.~Chen, S.~Horvath, and P.~Richtarik, ``Optimal client sampling for federated learning,'' \emph{arXiv preprint arXiv:2010.13723}, 2020.

\bibitem{ruan2021towards}
Y.~Ruan, X.~Zhang, S.-C. Liang, and C.~Joe-Wong, ``Towards flexible device participation in federated learning,'' in \emph{International Conference on Artificial Intelligence and Statistics}.\hskip 1em plus 0.5em minus 0.4em\relax PMLR, 2021, pp. 3403--3411.

\bibitem{lai2021oort}
F.~Lai, X.~Zhu, H.~V. Madhyastha, and M.~Chowdhury, ``Oort: Efficient federated learning via guided participant selection,'' in \emph{15th $\{$USENIX$\}$ Symposium on Operating Systems Design and Implementation ($\{$OSDI$\}$ 21)}, 2021, pp. 19--35.

\bibitem{chai2020tifl}
Z.~Chai, A.~Ali, S.~Zawad, S.~Truex, A.~Anwar, N.~Baracaldo, Y.~Zhou, H.~Ludwig, F.~Yan, and Y.~Cheng, ``Tifl: A tier-based federated learning system,'' in \emph{Proceedings of the 29th international symposium on high-performance parallel and distributed computing}, 2020, pp. 125--136.

\bibitem{yang2022flash}
C.~Yang, M.~Xu, Q.~Wang, Z.~Chen, K.~Huang, Y.~Ma, K.~Bian, G.~Huang, Y.~Liu, X.~Jin \emph{et~al.}, ``Flash: Heterogeneity-aware federated learning at scale,'' \emph{IEEE Transactions on Mobile Computing}, 2022.

\bibitem{bonawitz2019towards}
K.~Bonawitz, ``Towards federated learning at scale: Syste m design,'' \emph{arXiv preprint arXiv:1902.01046}, 2019.

\bibitem{lai2022fedscale}
F.~Lai, Y.~Dai, S.~Singapuram, J.~Liu, X.~Zhu, H.~Madhyastha, and M.~Chowdhury, ``Fedscale: Benchmarking model and system performance of federated learning at scale,'' in \emph{International Conference on Machine Learning}.\hskip 1em plus 0.5em minus 0.4em\relax PMLR, 2022, pp. 11\,814--11\,827.

\bibitem{wu2020safa}
W.~Wu, L.~He, W.~Lin, R.~Mao, C.~Maple, and S.~Jarvis, ``Safa: A semi-asynchronous protocol for fast federated learning with low overhead,'' \emph{IEEE Transactions on Computers}, vol.~70, no.~5, pp. 655--668, 2020.

\bibitem{cho2020client}
Y.~J. Cho, J.~Wang, and G.~Joshi, ``Client selection in federated learning: Convergence analysis and power-of-choice selection strategies,'' \emph{arXiv preprint arXiv:2010.01243}, 2020.

\bibitem{ribero2022federated}
M.~Ribero, H.~Vikalo, and G.~De~Veciana, ``Federated learning under intermittent client availability and time-varying communication constraints,'' \emph{IEEE Journal of Selected Topics in Signal Processing}, vol.~17, no.~1, pp. 98--111, 2022.

\bibitem{nguyen2022federated}
J.~Nguyen, K.~Malik, H.~Zhan, A.~Yousefpour, M.~Rabbat, M.~Malek, and D.~Huba, ``Federated learning with buffered asynchronous aggregation,'' in \emph{International Conference on Artificial Intelligence and Statistics}.\hskip 1em plus 0.5em minus 0.4em\relax PMLR, 2022, pp. 3581--3607.

\bibitem{sun2022fedsea}
J.~Sun, A.~Li, L.~Duan, S.~Alam, X.~Deng, X.~Guo, H.~Wang, M.~Gorlatova, M.~Zhang, H.~Li \emph{et~al.}, ``Fedsea: A semi-asynchronous federated learning framework for extremely heterogeneous devices,'' in \emph{Proceedings of the 20th ACM Conference on Embedded Networked Sensor Systems}, 2022, pp. 106--119.

\bibitem{alosaime2024flare}
S.~Alosaime and A.~Jhumka, ``{FLARE: Availability Awareness for Resource-Efficient Federated Learning},'' in \emph{Proceedings of the 29th IEEE Pacific Rim International Symposium on Dependable Computing (PRDC)}, 2024.

\bibitem{abdelmoniem2023comprehensive}
A.~M. Abdelmoniem, C.-Y. Ho, P.~Papageorgiou, and M.~Canini, ``A comprehensive empirical study of heterogeneity in federated learning,'' \emph{IEEE Internet of Things Journal}, vol.~10, no.~16, pp. 14\,071--14\,083, 2023.

\bibitem{wang2019adaptive}
S.~Wang, T.~Tuor, T.~Salonidis, K.~K. Leung, C.~Makaya, T.~He, and K.~Chan, ``Adaptive federated learning in resource constrained edge computing systems,'' \emph{IEEE journal on selected areas in communications}, vol.~37, no.~6, pp. 1205--1221, 2019.

\bibitem{garey1979npc}
M.~R. Garey and D.~S. Johnson, \emph{Computers and Intractability: A Guide to the Theory of NP-Completeness}.\hskip 1em plus 0.5em minus 0.4em\relax W.H.Freeman \& Co Ltd, 1979.

\bibitem{reddi2020adaptive}
S.~Reddi, Z.~Charles, M.~Zaheer, Z.~Garrett, K.~Rush, J.~Kone{\v{c}}n{\`y}, S.~Kumar, and H.~B. McMahan, ``Adaptive federated optimization,'' \emph{arXiv preprint arXiv:2003.00295}, 2020.

\bibitem{zhang2021nn}
L.~L. Zhang, S.~Han, J.~Wei, N.~Zheng, T.~Cao, Y.~Yang, and Y.~Liu, ``Nn-meter: Towards accurate latency prediction of deep-learning model inference on diverse edge devices,'' in \emph{Proceedings of the 19th Annual International Conference on Mobile Systems, Applications, and Services}, 2021, pp. 81--93.

\bibitem{weiss2019wisdm}
G.~M. Weiss, ``Wisdm smartphone and smartwatch activity and biometrics dataset,'' \emph{UCI Machine Learning Repository: WISDM Smartphone and Smartwatch Activity and Biometrics Dataset Data Set}, vol.~7, pp. 133\,190--133\,202, 2019.

\bibitem{krizhevsky2014cifar10}
\BIBentryALTinterwordspacing
A.~Krizhevsky, V.~Nair, and G.~Hinton, ``The cifar-10 dataset,'' 2014. [Online]. Available: \url{http://www.cs.toronto.edu/kriz/cifar.html}
\BIBentrySTDinterwordspacing

\end{thebibliography}

\end{document}